\tikzset{every tree node/.style={align=center, anchor=north}}
\newtheorem{proposition}{Proposition}[section]
\DeclareMathOperator{\softmax}{softmax}
\title{From Hyperbolic Geometry Back to Word Embeddings}
\author{Sultan Nurmukhamedov \\
  Yandex School of Data Analysis \\
  \texttt{soltustik@gmail.com} \\\And
  Thomas Mach \\
  University of Potsdam \\
  \texttt{mach@uni-potsdam.de} \AND
  Arsen Sheverdin \\
  University of Amsterdam \\
  \texttt{arsen.sheverdin@student.uva.nl} \\\And
  Zhenisbek Assylbekov \\
  Nazarbayev University \\
  \texttt{zhassylbekov@nu.edu.kz}
  }
\date{}
\begin{document}
\maketitle
\thispagestyle{fancy}
\begin{abstract}

We choose random points in the hyperbolic disc and claim that these points are already word representations. However, it is yet to be uncovered which point corresponds to which word of the human language of interest. This correspondence can be approximately established using a pointwise mutual information between words and recent alignment techniques.
\end{abstract}

\section{Introduction}
Vector representations of words are ubiquitous in modern natural language processing (NLP). There are currently two large classes of word embedding models: they build (1) static and (2) contextualized word vectors correspondingly. 

Static embeddings map each \textit{word type} into a vector of real numbers, regardless of the context in which the word type is used. The most prominent representatives of this class of models are {\sc word2vec} \cite{mikolov2013distributed,mikolov2013efficient} and {\sc GloVe} \cite{pennington2014glove}. The obvious problem with this approach is the representation of polysemous words, such as \textit{bank}---it becomes unclear whether we are talking about a financial institution, or we are talking about the river bank. 

Contextualized word embeddings, such as {\sc ELMo} \cite{peters2018deep} and {\sc BERT} \cite{devlin2019bert}, solve this problem by mapping each \textit{word token} into a vector space depending on the context in which the given word token is used, i.e.\ the same word will have different vector representations when used in different contexts. The second
approach can nowadays be considered mainstream, despite relatively few papers offering theoretical justifications for contextualized word embeddings. 

For static embeddings, on the contrary, there is a number of theoretical works, each of which offers its own version of what is happening when word vectors are trained. An incomplete list of such works includes those of \newcite{levy2014neural}, \newcite{arora2016latent}, \newcite{hashimoto2016word}, \newcite{gittens2017skip}, \newcite{tian2017mechanism}, \newcite{ethayarajh2019towards}, \newcite{allen2019vec}, \newcite{allen2019analogies}, \newcite{assylbekov2019context},  \newcite{zobnin2019learning}. Other advantages of static
embeddings over contextualized ones include faster training (few hours instead
of few days) and lower computing requirements (1 consumer-level GPU instead of
8--16 non-consumer GPUs). Morevoer, static embeddings are still an integral part
of deep neural network models that produce contextualized word vectors, because
embedding lookup matrices are used at the input and output (softmax) layers of
such models. Therefore, we consider it necessary to further study static
embeddings.

Several recent works \cite{nickel2017poincare,tifrea2018poincar} argue that
static word embeddings should be better trained in hyperbolic spaces than in
Euclidean spaces, and provide empirical evidence that word embeddings trained in
hyperbolic spaces need less dimensions to achieve the same quality as
state-of-the-art Euclidean vectors.\footnote{The quality of word vectors is
  usually measured by the performance of downstream tasks, such as similarity,
  analogies, part-of-speech tagging, etc.} Usually such works motivate the
hyperbolicity of word embeddings by the fact that hyperbolic spaces are better
suited for embedding hierarchical structures. Words themselves often denote concepts
with an underlying hierarchy. An example of such a hierarchy is
the {\sc WordNet} database, an excerpt of which is shown in
Fig.~\ref{fig:wordnet}. 
\begin{figure}[htbp]
\centering
\begin{tikzpicture}
\Tree
[.\node{carnivore};
    [.\node{feline};
        [.\node{big cat}; 
            [.\node{lion}; ]
            [.\node{tiger}; ]
        ]
        [.\node{cat}; ]
    ]
    [.\node{canine};
        [.\node{dog}; ]
        [.\node{wolf}; ]
        [.\node{fox}; ]
    ] 
]
\end{tikzpicture}
\caption{An excerpt from the {\sc WordNet} database.}
\label{fig:wordnet}
\end{figure}
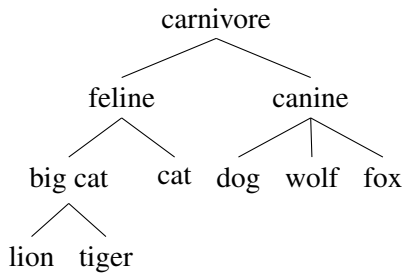

In the present paper we will investigate where the hyperbolicity originates
from.  If we take the state-of-the-art Euclidean embeddings, is it possible to
establish a direct connection between them and their counterparts from a
hyperbolic word embedding? This was answered positively by \citet{assylbekov2020binarized} who established a chain of
connections: from word embeddings to co-occurrence matrices, then to complex networks, and, finally, to hyperbolic spaces. In this paper, to provide an additional
justification for the constructed chain, we propose a way to move from the final
point, hyperbolic spaces, to the initial one, word embeddings. We show that drawing random points from the
hyperbolic plane results in a set of points that reasonably well resembles word
embeddings. In fact, we can match these points to word embeddings. Contrary, the
same trick does not work with points drawn at random in the Euclidean
space. Thus, one can argue that the hyperbolic space provides the underlying
structure for word embeddings, while in the Euclidean space this structure has
to be superimposed. 

\subsection*{Notation}
We denote with $\mathbb{R}$ the real numbers. Bold-faced lowercase letters ($\mathbf{x}$) denote vectors, plain-faced lowercase letters ($x$) denote scalars, bold-faced uppercase letters ($\mathbf{A}$) denote matrices, $\langle\mathbf{x},\mathbf{y}\rangle$ is the Euclidean inner product. We use $\mathbf{A}_{a:b,c:d}$ to denote a submatrix located at the intersection of rows $a, a+1, \ldots, b$ and columns $c, c + 1, \ldots, d$ of $\mathbf{A}$.
`i.i.d.' stands for `independent and identically distributed', `p.d.f' stands for `probability distribution function'. 
We use the sign $\propto$ to abbreviate `proportional to', and the sign $\sim$ to abbreviate `distributed as'. 

Assuming that words have already been converted into indices, let $\mathcal{W}:=\{1,\ldots,n\}$ be a finite vocabulary of words. Following the setup of the widely used {\sc word2vec} model \cite{mikolov2013efficient,mikolov2013distributed}, we use \textit{two} vectors per each word $i$: (1) $\mathbf{w}_i\in\mathbb{R}^d$ when $i\in\mathcal{W}$ is a center word, (2) $\mathbf{c}_i\in\mathbb{R}^d$ when $i\in\mathcal{W}$ is a context word; and we assume that $d\ll n$. 

In what follows we assume that our dataset consists of co-occurence pairs $(i,j)$. We say that ``the words $i$ and $j$ co-occur'' when they co-occur in a fixed-size window of words. 
Let $\#(i,j)$ be the number of times the words $i$ and $j$ co-occur.

\begin{figure*}
    \begin{minipage}[t]{.39\textwidth}
    \includegraphics[width=\textwidth]{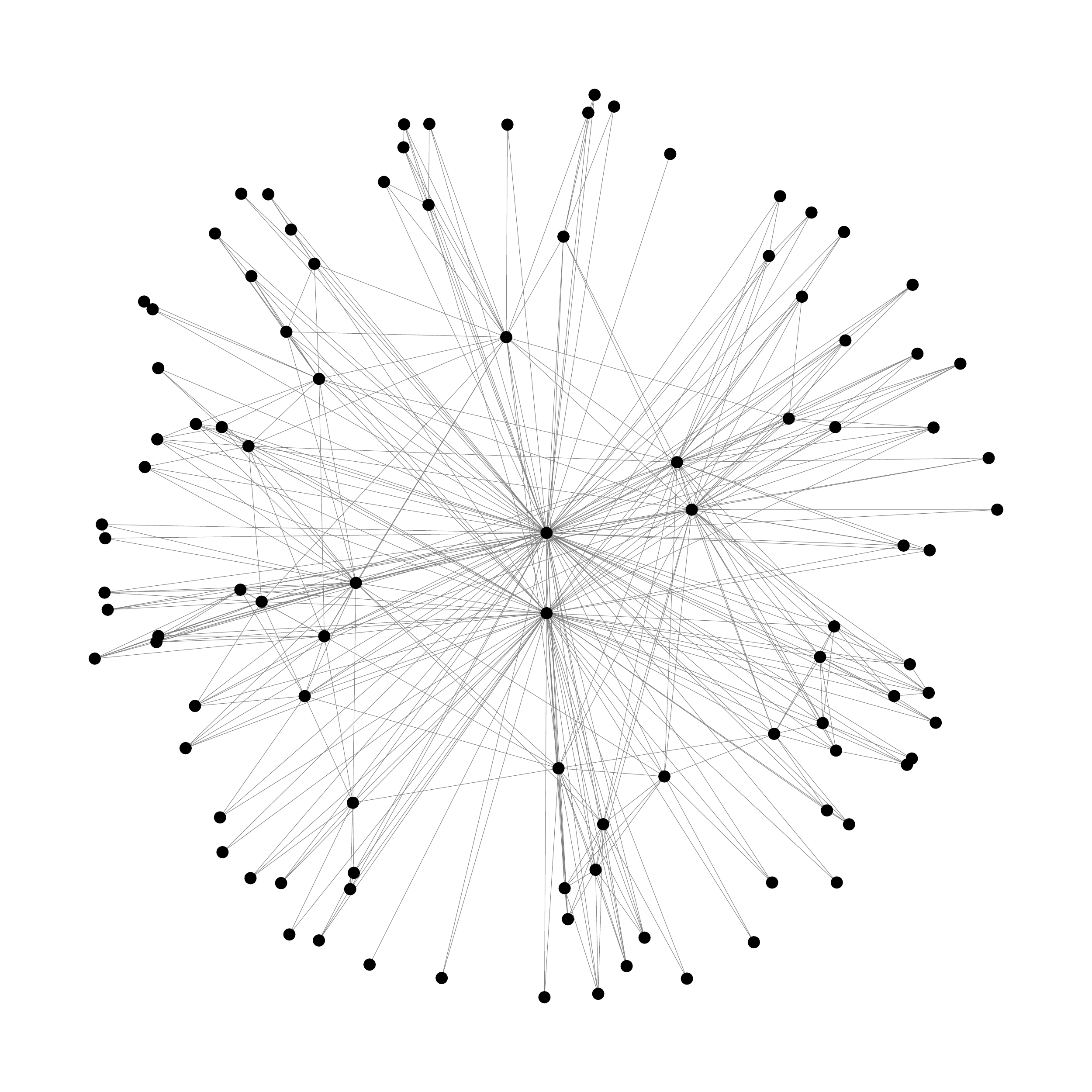}
    \caption{Random hyperbolic graph.}
    \label{fig:rhg}
    \end{minipage}\hfill\begin{minipage}[t]{.59\textwidth}\includegraphics[width=\textwidth]{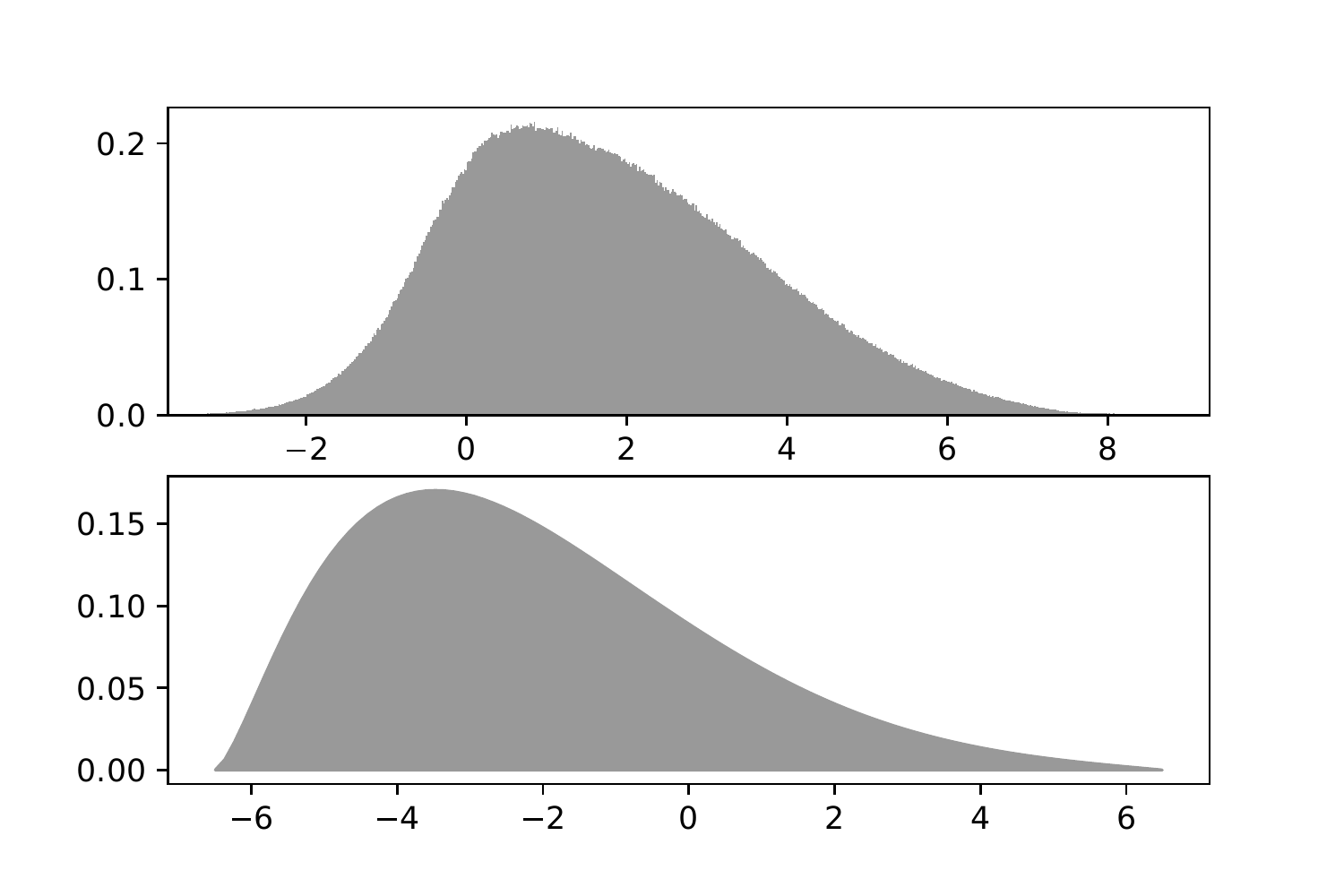}
    \caption{Distribution of PMI values (top) and of $R-X$.}
    \label{fig:pmi_and_dist}
    \end{minipage}
\end{figure*}

\section{Background: From Word Embeddings to Hyperbolic Space}
Our departure point is the skip-gram with negative sampling (SGNS) word embedding model of \newcite{mikolov2013distributed} that maximizes the following objective function
\begin{multline}
    \sum_{i\in\mathcal{W}}\sum_{j\in\mathcal{W}}\#(i,j)\log\sigma(\langle\mathbf{w}_i,\mathbf{c}_j\rangle)\\+k\cdot\mathbb{E}_{j'\sim p}[\log
    \sigma(-\langle\mathbf{w}_i,\mathbf{c}_{j'}\rangle)],\label{eq:sgns}
\end{multline}
where $\sigma(x)=\frac{1}{1+e^{-x}}$ is the logistic sigmoid function, $p$ is a smoothed unigram probability distribution for words,\footnote{The authors of SGNS suggest $p(i)\propto\#(i)^{3/4}$.} and $k$ is the number of negative samples to be drawn. Interestingly, training SGNS is approximately equivalent to finding a low-rank approximation of a shifted pointwise mutual information (PMI) matrix \cite{levy2014neural} in the form 
\begin{equation}
\log\frac{p(i,j)}{p(i)p(j)}-\log k\approx\langle\mathbf{w}_i,\mathbf{c}_j\rangle,\label{spmi}
\end{equation}
where the left-hand side is the 
shifted PMI between $i$ and $j$, and the right-hand side is an $ij$-th element of a matrix with rank $\le d$ since $\mathbf{w}_i,\mathbf{c}_j\in\mathbb{R}^d$. This approximation was later re-derived by \newcite{arora2016latent}, \newcite{zobnin2019learning}, \newcite{assylbekov2019context}, and \newcite{allen2019vec} under different sets of assuptions. In a recent paper, \newcite{assylbekov2020binarized} showed that the removal of the sigmoid transformation in the SGNS objective \eqref{eq:sgns} gives word embeddings comparable in quality with the original SGNS embeddings. A maximization of such modified objective results in a low-rank approximation of a \textit{squashed shifted} PMI ($\sigma$SPMI) matrix, defined as 
\begin{equation}
\mathbf{A}_{ij}:=\sigma\left(\log\frac{p(i,j)}{p(i)p(j)}-\log k\right).\label{eq:bpmi}
\end{equation}
Moreover, treating the $\sigma$SPMI matrix as a connection probabilities matrix of a random graph, the authors show that such graph is a \textit{complex network}, that is it has strong clustering and scale-free degree distribution, and according to \newcite{krioukov2010hyperbolic}, such graph possesses an effective hyperbolic geometry underneath. The following chain summarizes this argument:
\begin{multline}
\boxed{\text{Word Embeddings}}\quad\longrightarrow\quad\boxed{\sigma\text{SPMI}}\quad\longrightarrow\\\boxed{\text{Complex Network}}\quad\longrightarrow\quad\boxed{\text{Hyperbolic Space}}\notag
\end{multline}
In our work, we go from the final point (hyperbolic space) to the starting one (word embeddings), and the next section provides the details of our method.

\begin{table*}[htbp]
\begin{center}
\begin{tabular}{l | c c c | c c}
\toprule
 \multirow{2}{*}{Method} & \multicolumn{3}{c|}{Word Similarity} & \multicolumn{2}{c}{POS Tagging}\\
 & {\sc WS353} & {\sc Men} & {\sc M. Turk} & {\sc CoNLL-2000} & {\sc Brown}\\
\midrule
 SGNS                & .678 & .656 & .690 & 90.77 & 92.60\\
 PMI + SVD           & .669 & .674 & .666 & 92.25 & 93.76\\
 $\sigma$SPMI + SVD  & .648 & .622 & .666 & 92.76 & 93.78\\
 RHG + SVD + Align   & .406 & .399 & .509 & 92.23 & 93.19\\
 Random + Align      & .165 & .117 & .111 & 81.89 & 89.39\\
\bottomrule
\end{tabular}
\end{center}
\caption{Evaluation of word embeddings on the similarity and POS tagging tasks. For the similarity tasks the evaluation metric is the Spearman's correlation with human ratings, for the POS tagging tasks it is accuracy. \textit{Random} stands for random vectors that were obtained as i.i.d. draws from $\mathcal{N}(\mathbf{0},\mathbf{I})$.}
\label{tab:emb_eval}
\end{table*}

\section{Method: From Hyperbolic Geometry to Word Embeddings}

It is difficult to visualize hyperbolic spaces because they cannot be isometrically embedded into any Euclidean space.\footnote{This means that we cannot map points of a hyperbolic space into points of a Euclidean space in such way that the distances between points are preserved.} However, there exist  models of hyperbolic spaces: each model emphasizes different aspects of hyperbolic geometry, but no model simultaneously represents all of its properties. We will consider here the so-called \textit{native} model  \cite{krioukov2010hyperbolic}, in which the hyperbolic plane $\mathbb{H}^2$ is represented by a disk of radius $R$, and we use polar coordinates $(r,\theta)$ to specify the position of any point $v\in\mathbb{H}^2$, where the radial coordinate $r$ equals the hyperbolic distance of $v$ from the origin. Given this notation, the distance $x$ between two points with coordinates $(r,\theta)$ and $(r',\theta')$ satisfies the hyperbolic law of cosines
\begin{multline}
\textstyle\cosh x=\cosh r\cosh r'\\-\sinh r\sinh r'\cos(\theta-\theta'),\label{eq:hyper_dist}
\end{multline}
for the hyperbolic space of constant curvature $-1$.\footnote{Defining constant curvature is beyond the scope of our paper. We just mention here that there are only three types of isotropic spaces: Euclidean
(zero curvature), spherical (positively curved), and hyperbolic (negatively curved).}
A key property of hyperbolic spaces is that they expand faster than Euclidean spaces. E.g., a circle with radius $r$ has in the Euclidean plane a length of $2\pi r=\Theta(r)$ and an area of $\pi r^2=\Theta(r^2)$, while its length and area in the hyperbolic plane are $2\pi\sinh( r)=\Theta(e^{r})$ and $2\pi(\cosh r-1)=\Theta(e^{r})$ correspondingly. 
It is noteworthy that in a balanced tree with branching factor $b$, the number
of nodes that are $r$ edges from the root grows as $\Theta(b^r)$, i.e.\
exponentially with $r$, leading to the suggestion
that hierarchical complex networks with tree-like structures
might be easily embeddable in hyperbolic space.

Based on the above facts, we construct a random hyperbolic (RHG) graph as in the work of \newcite{krioukov2010hyperbolic}: we place randomly $n$ points (nodes) into a hyperbolic disk of radius $R$, and each pair of nodes $(i,j)$ is connected with probability $\sigma(R-x_{ij})$, where $x_{ij}$ is the hyperbolic distance \eqref{eq:hyper_dist} between points $i$ and $j$. Angular coordinates of the nodes are sampled from the uniform distribution: $\theta\sim\mathcal{U}[0,2\pi]$, while the radial coordinates are sampled from the exponential p.d.f. 
$$
\rho(r)=\frac{\alpha\sinh\alpha r}{\cosh \alpha R-1}=\Theta(e^{\alpha r}).
$$
The hyperparameters $R$ and $\alpha$ are chosen based on the total number of nodes $n$, the desired average degree $\bar{k}$ and the power-law exponent $\gamma$ according to the equations (22) and (29) of \newcite{krioukov2010hyperbolic}. An example of such RHG is shown in Figure~\ref{fig:rhg}. Notice, that the connection probabilities matrix of our graph is 
$$
\mathbf{B}_{ij}:=\sigma(R-x_{ij}),
$$
Comparing this to \eqref{eq:bpmi}, we see that if $\mathbf{A}$ and $\mathbf{B}$ induce structurally similar graphs then the distribution of the PMI values $\log\frac{p(i,j)}{p(i)p(j)}$ should be similar to the distribution of $R-x_{ij}$ values (up to a constant shift). To test this empirically, we compute a PMI matrix of a well-known corpus, \texttt{text8}, 
and compare the distribution of the PMI values with the p.d.f.\ of $R-X$, where $X$ is a distance between two random points of a hyperbolic disk (the exact form of this p.d.f.\ is given in Proposition~\ref{prop:dist_distr}). The results are shown in Figure~\ref{fig:pmi_and_dist}. As we can see, the two distributions are similar in the sense that both are unimodal and right-skewed. The main difference is in the shift---distribution of $R-X$ is shifted to the left compared to the distribution of the PMI values. 

We hypothesize that the nodes of the RHG treated as points of the hyperbolic space are \textit{already} reasonable word embeddings for the words of our vocabulary $\mathcal{W}$. The only thing that we do not know is the correspondence between words $i\in\mathcal{W}$ and nodes of the RHG. Instead of aligning words with nodes, we can align their vector representations. For this, we take singular value decompositions (SVD) of $\mathbf{A}$ and $\mathbf{B}$:
$$
\mathbf{A}=\mathbf{U}_A\boldsymbol\Sigma_A\mathbf{V}_A^\top,\quad\mathbf{B}=\mathbf{U}_B\boldsymbol\Sigma_B\mathbf{V}_B^\top,$$ 
and then obtain embedding matrices by
\begin{align*}
\mathbf{W}_A&:=\mathbf{U}_{A,1:n,1:d}\boldsymbol\Sigma^{1/2}_{A,1:d,1:d}\in\mathbb{R}^{n\times d}\\\mathbf{W}_B&:=\mathbf{U}_{B,1:n,1:d}\boldsymbol\Sigma^{1/2}_{B,1:d,1:d}\in\mathbb{R}^{n\times d}
\end{align*}
as in the work of \citet{levy2014neural}. An $i^\text{th}$ row in $\mathbf{W}_A$ is an embedding of the word $i\in\mathcal{W}$, while an $i^{\text{th}}$ row in $\mathbf{W}_B$ is an embedding of the RHG's node $i$. To align these two sets of embeddings we apply a recent stochastic optimization method of \citet{grave2019unsupervised} that solves
$$
\min_{\mathbf{Q}\in\mathcal{O}_d}\min_{\mathbf{P}\in\mathcal{P}_n}\|\mathbf{W}_A\mathbf{Q}-\mathbf{P}\mathbf{W}_B\|^2_2,
$$
where $\mathcal{O}_d$ is the set of $d\times d$ orthogonal matrices and $\mathcal{P}_d$ is the set of $n\times n$ permutation matrices. As one can see, this method assumes that \textit{alignment} between two sets of embeddings is not only a permutation from one set to the other, but also an orthogonal transformation between the two. Once the alignment is done, we treat $\mathbf{PW}_B$ as an embedding matrix for the words in $\mathcal{W}$.

\section{Evaluation}

In this section we evaluate the quality of word vectors resulting from a RHG\footnote{Our code is available at \url{https://github.com/soltustik/RHG}} against those from the SGNS, PMI, and $\sigma$SPMI. We use the \texttt{text8} corpus mentioned in the previous section.
We were ignoring words that appeared less than 5 times (resulting in a vocabulary of 71,290 tokens). We set window size to 2, subsampling threshold to $10^{-5}$, and dimensionality of word vectors to 200. The SGNS embeddings were trained using our custom implementation.\footnote{\url{https://github.com/zh3nis/SGNS}} The PMI and BPMI matrices were extracted using the {\sc hyperwords} tool of \newcite{levy2015improving} and SVD was performed using the {\sc PyTorch} library of \newcite{paszke2019pytorch}.

The embeddings were evaluated on word similarity and POS tagging tasks. For word similarity we used {\sc WordSim} \citep{finkelstein2002placing}, {\sc MEN} \citep{bruni2012distributional}, and {\sc M.Turk} \citep{radinsky2011word} datasets. For POS tagging we trained a simple classifier\footnote{feedforward neural network with one hidden layer and softmax output layer} by feeding in the embedding of a current word and its nearby context to predict its part-of-speech (POS) tag:
$$
\widehat{\mathrm{POS}}_t = \softmax(\sigma(\mathbf{A}[\mathbf{w}_{t-2};\ldots;\mathbf{w}_{t+2}]+\mathbf{b}))
$$
where $[\mathbf{x};\mathbf{y}]$ is concatenation of $\mathbf{x}$ and $\mathbf{y}$. The classifier was trained on {\sc CoNLL-2000} \cite{tjong-kim-sang-buchholz-2000-introduction} and {\sc Brown} \cite{kucera1967computational} datasets.  

The results of evaluation are provided in Table~\ref{tab:emb_eval}. As we see, vector representations of words generated from a RHG lag behind in word similarity tasks from word vectors obtained by other standard methods. Note, however, that the similarity task was designed with Euclidean geometry in mind. Even though our RHG-based vectors are also ultimately placed in the Euclidean space (otherwise the alignment step would not have been possible), their nature is inherently non-Euclidean. Therefore, the similarity scores for them may not be indicative. So, for example, when RHG vectors are fed into a nonlinear model for POS tagging, they are comparable with other types of vectors. 

We notice that random vectors---generated as i.i.d. draws from $\mathcal{N}(\mathbf{0},\mathbf{I})$ and then aligned to the embeddings from $\sigma$SPMI---show poor results in the similarity tasks and underperform all other word embedding methods in the POS tagging tasks. This calls into question whether multivariate Gaussian is a reasonable (prior) distribution for word vectors as was suggested by \citet{arora2016latent}, \citet{assylbekov2019context}.


\section{Conclusion and Future Work}
  
  
  
  In this work we show that word vectors can be obtained from hyperbolic geometry
without explicit training. We obtain the embeddings by randomly drawing points
in the hyperbolic plane and by finding correspondence between these points and
the words of the human language. This correspondence is determined by the
relation (hyperbolic distance) to other words. This method avoids the, often
expensive, training of word vectors in hyperbolic spaces as in
\citet{tifrea2018poincar}. A direct comparison is not what this paper
attempts---our method is cheaper but produces word vectors of lower quality. Our
method simply shows that word vectors do fit better into hyperbolic space than
into Euclidean space. 

Finally, we want to sketch a possible direction for future work. The hyperbolic
space is a special case of a Riemannian manifold. Are Riemannian manifolds
better suited for word vectors? In particular which manifolds should one use?
At the moment, there is only limited empirical knowledge to address these
questions. For instance, \citet{DBLP:conf/iclr/GuSGR19} obtained word
vectors of better quality, according to the similarity score, in the product of
hyperbolic spaces, which is still a Riemannian manifold but not a hyperbolic
space anymore. We are hopeful that future work may provide an explanation for
this empirical fact.

\section*{Acknowledgements}
Zhenisbek Assylbekov was supported by the Program of Targeted Funding ``Economy of the Future'' \#0054/\foreignlanguage{russian}{ПЦФ-НС}-19. The work of Sultan Nurmukhamedov was supported by the Nazarbayev University Faculty-Development Competitive Research Grants Program, grant number 240919FD3921. The authors would like to thank anonymous reviewers for their feedback.

\bibliography{ref}

\appendix

\section{Auxiliary Results} \label{app:dist}
\begin{proposition} \label{prop:dist_distr} Let $X$ be a distance between two points that were randomly uniformly placed in the hyperbolic disk of radius $R$. The probability distribution function of $X$ is given by
\begin{multline}
    f_X(x)=\int_0^R\int_0^R\\
    \frac{\sinh(x)\rho(r_1)\rho(r_2)dr_1dr_2}{\pi\sqrt{1-A(r_1,r_2,x)}\sinh(r_1)\sinh(r_2)},\label{eq:pdf_x}
\end{multline}
where $A(r_1,r_2,x)=\frac{\cosh(r_1)\cosh(r_2)-\cosh(x)}{\sinh(r_1)\sinh(r_2)}$, and $\rho(r)=\frac{\alpha\sinh\alpha r}{\cosh\alpha R-1}$.
\end{proposition}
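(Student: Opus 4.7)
The plan is to derive the density by first conditioning on the radii $(r_1,r_2)$ and then performing a one-variable change of variables on the angular part. Writing each point in polar form $(r_i,\theta_i)$, the independence structure gives a joint density $\rho(r_1)\rho(r_2)\cdot\frac{1}{(2\pi)^2}$ on $[0,R]^2\times[0,2\pi]^2$. By rotational invariance of the hyperbolic law of cosines \eqref{eq:hyper_dist}, the distance $X$ depends on $(\theta_1,\theta_2)$ only through the difference $\Theta:=\theta_1-\theta_2\pmod{2\pi}$. Marginalizing one of the two angles reduces the angular part to $\Theta\sim\mathcal{U}[0,2\pi]$, and because $\cos\Theta=\cos(2\pi-\Theta)$, the induced map $\Theta\mapsto X$ is two-to-one; I would therefore further restrict to $\Theta\in[0,\pi]$ with density $1/\pi$, on which the map is a bijection.

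Next I would apply a change of variables from $\Theta$ to $X$ via $\cosh x=\cosh r_1\cosh r_2-\sinh r_1\sinh r_2\cos\Theta$. Differentiating yields $\sinh x\,dx=\sinh r_1\sinh r_2\sin\Theta\,d\Theta$, and substituting $\cos\Theta=A(r_1,r_2,x)$ together with $\sin\Theta=\sqrt{1-A(r_1,r_2,x)^2}$ (valid because $\Theta\in[0,\pi]$) gives the conditional density
\[
f_{X\mid r_1,r_2}(x)=\frac{\sinh x}{\pi\,\sinh r_1\,\sinh r_2\,\sqrt{1-A(r_1,r_2,x)^2}}.
\]
The unconditional density then follows from the law of total probability by integrating against $\rho(r_1)\rho(r_2)\,dr_1\,dr_2$ over $[0,R]^2$, which recovers the form claimed in the proposition.

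The main obstacle, such as it is, is bookkeeping around the angular Jacobian: correctly deriving the constant $1/\pi$ (rather than $1/(2\pi)$) through the two-to-one reduction, keeping track of the sign of $\sin\Theta$ so that the square-root branch is the right one, and verifying that the integrand is implicitly supported only on the region where $|A(r_1,r_2,x)|\le 1$ so that the change of variables is well defined. Once these are in place, the derivation is a routine application of conditioning together with a one-variable change of variables.
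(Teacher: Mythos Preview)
Your proposal is correct and follows essentially the same route as the paper: condition on $(r_1,r_2)$, reduce the angular dependence to a single angle uniform on $[0,\pi]$, compute the conditional density of $X$ (the paper does this by writing the CDF via $\cos\gamma$ and differentiating, you via the Jacobian directly---an immaterial difference), and then integrate against $\rho(r_1)\rho(r_2)$. Note that your $\sqrt{1-A(r_1,r_2,x)^2}$ is the correct expression; the missing square in the displayed formula is a typo in the paper.
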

\begin{proof}
Let us throw randomly and uniformly two points $(r_1,\theta_1)$ and $(r_2,\theta_2)$ into the hyperbolic disk of radius $R$, i.e.\ $r_1, r_2\,\,{\stackrel{\text{i.i.d.}}{\sim}}\,\,\rho(r)$, $\theta_1, \theta_2\,\,{\stackrel{\text{i.i.d.}}{\sim}}\,\,\text{Uniform}[0,2\pi)$. Let $X$ be the distance between these points ($X$ is a random variable). Let $\gamma$ be the angle between these points, then $\gamma:=\pi-|\pi-|\theta_1-\theta_2||\sim\text{Uniform}[0,\pi)$ and thus
$$
f_{\cos\gamma}(t)=\frac{1}{\pi\sqrt{1-t^2}},\quad t\in[-1,1].
$$
Since the distance in our model of hyperbolic plane is given by
$$
X=\cosh^{-1}[\cosh r_1\cosh r_2-\sinh r_1\sinh r_2\cos\gamma]
$$
we have
\begin{align*}
&\Pr(X\le x)\\
&=\Pr\left(\cos\gamma\ge\underbrace{\frac{\cosh r_1 \cosh r_2-\cosh x}{\sinh r_1\sinh r_2}}_{A(r_1,r_2,x)}\right)\\
&=\Pr(\cos\gamma\ge A(r_1,r_2,x))\\
&=\int_{A(r_1,r_2,x)}^{+\infty}\frac{1}{\pi\sqrt{1-t^2}}\\
&=\frac12-\frac{\sin^{-1}A(r_1,r_2,x)}{\pi},
\end{align*}
and therefore
\begin{multline*}
f_{X\mid r_1,r_2}(x)=\frac{d}{dx}\left[\frac12-\frac{\sin^{-1}A(r_1,r_2,x)}{\pi}\right]\\=\frac{\sinh x}{\pi\sqrt{1-A(r_1,r_2,x)}\sinh( r_1)\sinh r_2}
\end{multline*}
for $x\in(|r_1-r_2|,r_1+r_2)$.
Integrating $f_{X\mid r_1,r_2}(x)\rho(r_1)\rho(r_2)$  with respect to $r_1$ and $r_2$ we get \eqref{eq:pdf_x}.
\end{proof}

\end{document}